\newtheorem{theorem}{Theorem}
\newtheorem{lemma}[theorem]{Lemma}
\newtheorem{assumption}{Assumption}
\newtheorem{remark}{Remark}
\newcommand{\BibTeX}{B\kern-.05em{\sc i\kern-.025em b}\kern-.08em\TeX}
\begin{document}

\begin{frontmatter}

\title{Mask-Encoded Sparsification: Mitigating Biased Gradients in Communication-Efficient Split Learning}

\author[1]
{Wenxuan Zhou}

\author[1]
{Zhihao Qu\footnote[\dagger]{Corresponding author.}}

\author[1,2]
{Shen-Huan Lyu}

\author[3]
{Miao Cai}

\author[2]
{Baoliu Ye}

\address[1]{Key Laboratory of Water Big Data Technology of Ministry of Water Resources, Hohai University, Nanjing, China}

\address[2]{National Key Laboratory for Novel Software Technology, Nanjing University, Nanjing, China}

\address[3]{Nanjing University of Aeronautics and Astronautics, Nanjing, China}

\address{\texttt{\{zhouwx,quzhihao,lvsh\}@hhu.edu.cn} \quad\texttt{miaocai@nuaa.edu.cn}\quad\texttt{yebl@nju.edu.cn}}

\begin{abstract}
This paper introduces a novel framework designed to achieve a high compression ratio in Split Learning (SL) scenarios where resource-constrained devices are involved in large-scale model training. Our investigations demonstrate that compressing feature maps within SL leads to biased gradients that can negatively impact the convergence rates and diminish the generalization capabilities of the resulting models. Our theoretical analysis provides insights into how compression errors critically hinder SL performance, which previous methodologies underestimate. To address these challenges, we employ a narrow bit-width encoded mask to compensate for the sparsification error without increasing the order of time complexity. Supported by rigorous theoretical analysis, our framework significantly reduces compression errors and accelerates the convergence. Extensive experiments also verify that our method outperforms existing solutions regarding training efficiency and communication complexity. Our code can be found at  \url{https://github.com/BinaryMus/MaskSparsification}.
\end{abstract}
\end{frontmatter}

\section{Introduction}
In recent years, there has been an upsurge in demand for training deep neural networks (DNNs)~\cite{dosovitskiy2021an, ouyang2022training, wang2023scientific} on limited-resource end devices such as smartphones and embedded AI chips, primarily for edge intelligence applications such as speech recognition and intelligent healthcare. 
Federated learning (FL)~\cite{pmlr-v202-li23z, pmlr-v202-song23h}, a widely-used distributed learning approach, allows a group of end devices to train a global model collaboratively without sharing personal data. 
However, FL requires training the entire DNN on these local devices before aggregating the results on a central server, which can lead to server underutilization and strain on these devices' computational and energy resources. 
Moreover, as DNNs become larger and more complex~\cite{krause2023commonsense, kruger2023performance, ouyang2022training}, this issue becomes more acute and potentially limits the feasibility of FL in resource-constrained environments.

To solve this problem, Split Learning (SL)~\cite{gupta2018distributed, 10314792, kang2017neurosurgeon, 10129922,10.1145/3538641.3561500} is increasingly emerging as a promising approach that leverages the capabilities of both the server and end devices. 
Specifically, SL splits the DNN models into two parts, namely the client-side model and the server-side model, which are deployed on multiple end devices and the server, respectively.
During forward propagation, intermediate feature maps (also called \textit{smashed data}) from the cutting position (also called \textit{cutlayer}) are transmitted from end devices to the server to facilitate training on the server-side layers. 
In backpropagation, partial derivative data from the cutlayer is distributed to the end devices to update client-side models. 
Consequently, the SL paradigm enables resource-limited end devices to participate in large-scale DNN training while protecting the privacy of both local data and the global model.

One of the primary challenges in SL is the high communication overhead~\cite{zhou2022hierarchical} that results from transmitting feature maps, particularly during forward propagation. 
In FL, lazy aggregation~\cite{mcmahan2017communication} can linearly reduce communication overhead. However, due to the non-additive nature of feature maps, lazy aggregation cannot be directly applied to SL. 
As a result, iterative transmissions in the SL paradigm consume significant time and are unaffordable due to limited bandwidth connections between the end devices and the server. In addition, compressing the feature map during the training phase enables the neural network to adapt to errors, reducing communication overhead during the inference phase.
Therefore, it is essential to develop communication-efficient techniques for SL to enable its practical deployment in the edge environment.

Various technologies have been investigated to reduce communication complexity in FL and other distributed learning paradigms, typically including quantization~\cite{pmlr-v202-li23o, pmlr-v202-wang23t} and sparsification~\cite{pmlr-v202-wang23t, Zhou_2023_ICCV}. 
However, these approaches are unsuitable for SL because they are designed to compress gradients or model parameters. 
Firstly, compressing feature maps, even with unbiased compression, will inevitably introduce biased gradients, thus affecting the model's convergence~\cite{beznosikov2023biased, stich2020analysis}. 
Secondly, compression error is a crucial determinant of the model's convergence. 
However, quantization and sparsification methods typically result in massive compression errors because quantization is sensitive to outliers (top values in the feature map), and sparsification produces errors due to filtered values. 
Finally, since feature maps from different training samples are independent, existing compensation mechanisms~\cite{NEURIPS2022_6fb9ea51,NEURIPS2021_ff1ced30}, such as reserving and adding filter-out values after Top-$k$ sparsification for the subsequent transmission, are inapplicable.

To overcome the abovementioned challenges, we propose a new communication compression framework for SL: mask-encoded sparsification (MS). MS is based on Top-$k$ sparsification and uses an encoded mask to compensate for the significant compression errors caused by sparsification. 
Specifically, the feature map is first subject to a Top-$k$ sparsification, after which an encoded mask indicates the position and value of every element in the feature map. The mask of all top values is a binary number of all $1$s, and the remaining masks are calculated with a quantization-like method.
As depicted in Fig.~\ref{sl}, MS utilizes a narrow bit width mask ($2$-bit), effectively reducing the compression error caused by sparsification and being helpful for convergence.

\begin{figure*}[htbp]
    \centering
    \includegraphics[width=0.7\textwidth]{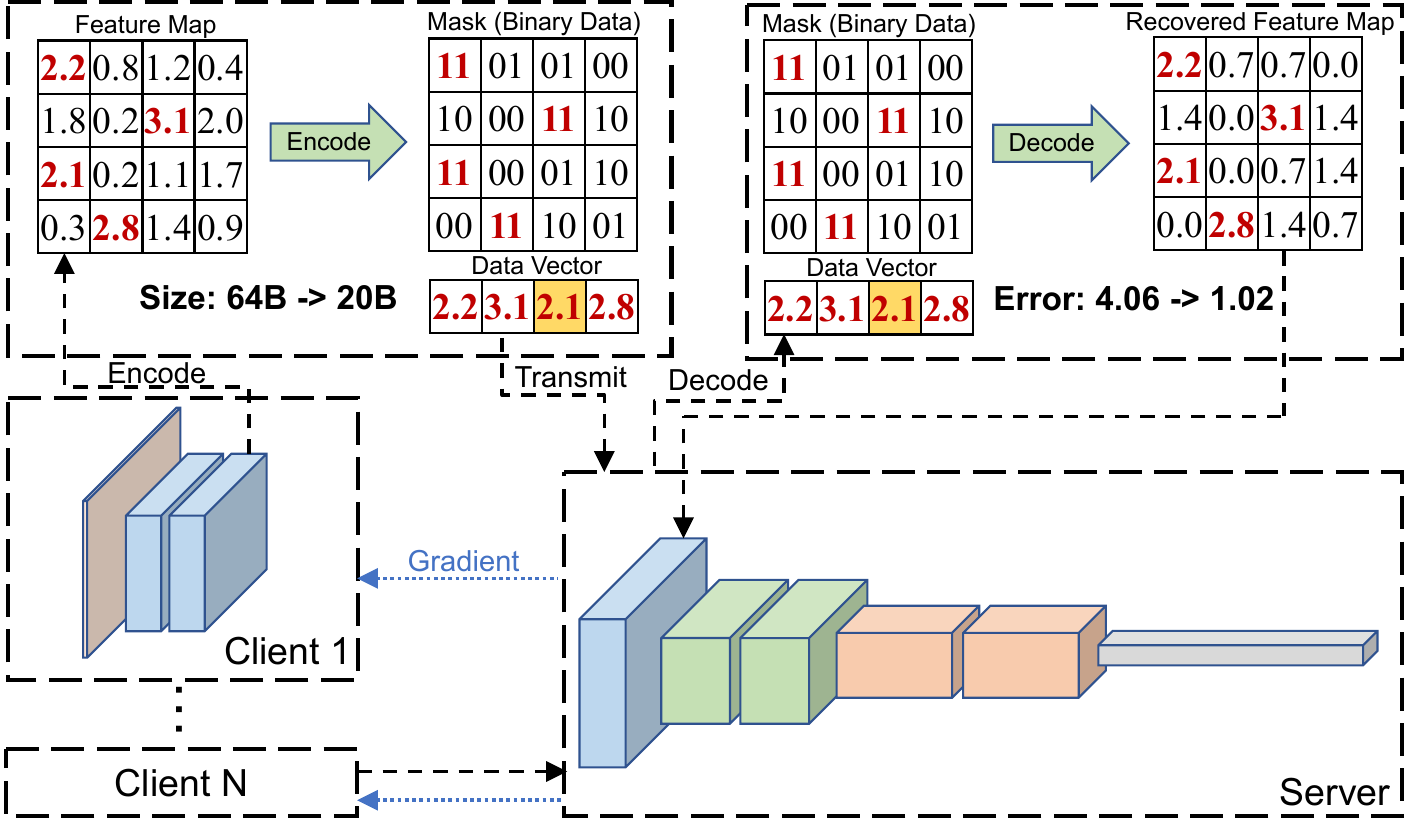}
    \caption{Using MS algorithm to compress smashed data in SL. Perform Top-$4$ sparsification on the smashed data, then compensate for filtered data with a $2$-bit width mask. The top values (highlighted in red) and will form a data vector in their original order. Their corresponding mask is $0b11$. The filtered values are calculated with a quantization-like method, MS evenly divides $0$ to $2.1$ (the smallest top value, with a yellow background) into $3$ ($3=2^2-1$) segments ($[0, 0.7)$,$[0.7, 1.4)$, and $[1.4, 2.1)$), the value in each segment will be rounded to $0$, $0.7$, and $1.4$, and their masks are $0b00$, $0b01$, and $0b10$, respectively. 
    Finally, the data vector is sent to the server along with the mask for decoding. All values with a mask of $0b11$ will be sequentially filled with the Top value, while the remaining masks will be filled with $0$, $0.7$, and $1.4$, respectively.
    This strategy markedly diminishes compression errors (measured by the $\ell_2$ norm) from $4.06$ to $1.02$, achieving superior performance compared to vanilla sparsification.}
    \label{sl}
\end{figure*}

The main contributions of this work are summarized as follows:
\begin{itemize}
    \item Our study reveals that compressing feature maps leads to biased gradients in SL. We establish the convergence rate of SL with compressed feature maps and find a situation where gradients converge to $0$ at the same rate as in the uncompressed version. 
    \item Our theoretical findings indicate that lower compression errors lead to improved convergence. Therefore, we propose mask-encoded sparsification (MS) and theoretically demonstrate its superior performance over previous methods under mild conditions. Also, empirical validation confirms that MS achieves lower errors than previous approaches.
    \item We conduct extensive experiments on various models and datasets. Experimental results show that MS significantly reduces communication overhead without accuracy loss, while previous methods deteriorate convergence or generalization. Notably, according to the empirical study on different cutlayers, we find that the feature extraction layers of neural networks (i.e., shallow layers) are more sensitive to compression errors.
\end{itemize}

\section{Related Work}
\subsection{Split Learning}

Split Learning (SL)~\cite{10274134, 10229027, 10314792, 10234566} partitions a DNN into client-side and server-side model~\cite{thapa2022splitfed, vepakomma2018split, 10129922, gupta2018distributed}, which are deployed on multiple end devices and the server, respectively.
This strategy addresses not only the efficient utilization of distributed computing power but also privacy concerns related to raw data and the sharing of the global model~\cite{vepakomma2018split}, as it allows clients to pre-process their data locally and retain a part of the model. 
Despite these benefits, SL still encounters the challenge of communication overhead~\cite{rt, zhou2022hierarchical}. 
The frequent exchange of local feature maps between the client and server leads to significant network traffic, especially in wireless networks, which can negatively impact training efficiency.

\subsection{Communication Compression}

\paragraph{Biased Compression Techniques}
Quantization~\cite{pmlr-v202-dorfman23a, 10228970} aims to minimize the bit width of numerical values by transforming them from their "continuous" forms to discrete counterparts, thus reducing the data storage demands. 
However, quantization inherently exhibits sensitivity toward outliers, leading to potentially significant compression errors. 
Methods based on clipping~\cite{kim2022basq, pmlr-v162-sakr22a} provide some relief by clipping these outliers, yet the core issue is that outliers in feature maps are essential and cannot merely be overlooked. 
Alternatives such as clustering quantization~\cite{abernathy2022incremental} and learning-based quantization strategies~\cite{gong2019differentiable} show promise in diminishing these compression errors. 
However, such approaches' computational overhead and complexity render them impractical for high-frequency iterative training scenarios, where rapid and efficient computation is paramount. 
In addition, sparsification converts dense vectors into their sparse forms. 
A prevalent technique is the Top-$k$ sparsification~\cite{alistarh2018convergence}, which retains only a vector's most significant $k$ values. 

\paragraph{Unbiased Compression Techniques}
Unbiased random quantization~\cite{alistarh2017qsgd} distinguishes itself through random rounding procedures, marking an advancement in quantization that guarantees the operation's unbiased nature.
Similarly, there is also an unbiased variant of sparsification~\cite{wangni201I8gradient} by enlarging randomly retained values, which guarantees an unbiased sparsification process. 

In the learning paradigm of transmission gradients, such as in FL, applying unbiased compression techniques has been shown to enable DNNs to converge and realize a convergence rate the same as uncompressed gradient~\cite{alistarh2017qsgd, wangni201I8gradient}.
However, applying unbiased compression directly to feature maps in SL does not safeguard against the production of unbiased gradients.
Additionally, these compression technologies tend to induce substantial compression errors, challenging their applicability for feature map compression in SL.

\section{Feature Map Compression}
This section presents the problem formulation associated with the feature map compression within the SplitFed learning paradigm~\cite{thapa2022splitfed}. 
We further provide an in-depth analysis of gradients and convergence when applying feature map compression techniques in SL.

\subsection{Problem Formulation}
Consider a SL framework consisting of $N$ clients. 
Client $i$ has a local model $f_i^c(\cdot)$ parameterized by $\theta_i^c$, while its complementary server-side model is denoted as $f_i^s(\cdot)$, parameterized by $\theta_i^s$. 
Furthermore, client $i$ has an unbiased sample dataset $x_{i}$ and corresponding labels $y_{i}$.

During the forward propagation phase, client $i$ transmits the smashed data, $f_i^c(\theta_i^c; x_{i})$, along with the labels $y_{i}$ to the server for further computations. 
The challenge lies in the transmission of $f_i^c(\theta_i^c; x_{i})$, which is the primary bottleneck in the SL training process, primarily due to the generally slower uplink speeds than downlink speeds. 
We focus on compressing the smashed data, with the compression error at client $i$ denoted by $\epsilon_{i}$. 
The server computes the aggregate loss as follows:
\begin{align}
L(\theta_1^c, \theta_1^s, \ldots, \theta_N^c, \theta_N^s; x_1, y_1, \epsilon_1, \ldots, x_N, y_N, \epsilon_N)\nonumber \\
\coloneqq \frac{1}{N} \sum_{i=1}^{N} f_i^s(\theta_i^s; f_i^c(\theta_i^c; x_{i}) + \epsilon_i, y_i).
\end{align}

During backpropagation, the server computes the gradient $\hat{g_i^s} = \nabla_{\theta_i^s} L$, followed by the mean gradient $\hat{g^s} = \frac{1}{N} \sum_{i=1}^{N} \hat{g_i^s}$ for model parameter updates.
The gradient concerning each client's smashed data, $\nabla_{f_i^c(\theta_i^c; x_i+\epsilon_i)} L$, returns to the respective client $i$ for parameter adjustment. 
Subsequently, client $i$ calculates its parameter gradient, $\hat{g_i^c} = \nabla_{\theta_i^c} {f_i^c(\theta_i^c; x_i+\epsilon_i)} \nabla_{f_i^c(\theta_i^c; x_i+\epsilon_i)} L$. 
All clients then collaborate to aggregate these gradients through either a server-based or an all-reduce method, yielding $\hat{g^c} = \frac{1}{N} \sum_{i=1}^{N}\hat{g_i^c}$.
Both clients and the server update their respective parameters employing stochastic gradient descent with a learning rate $\eta$:
\begin{align}
    \theta_i^c \coloneqq \theta_i^c - \eta \cdot \hat{g^c} , \quad \theta_i^s \coloneqq \theta_i^s - \eta \cdot \hat{g^s}.
\end{align}

In scenarios where no compression is applied to the feature maps ($\epsilon_i = 0$), we denote the server gradient, client gradient, aggregated server gradient, and aggregated client gradient as $g_i^s$, $g_i^c$, $g^s$, and $g^c$, respectively.

Given the aggregation of parameters in each round, the objective function can be simplified to:
\begin{align}
    F(\theta^c, \theta^s) \coloneqq L(\theta^c, \theta^s, \ldots, \theta^c, \theta^s),
\end{align}
thereby rendering the SL's optimization objective as:\footnote{In this paper, Use $[v_1, v_2]$ to represent column vectors $v_1$ and $v_2$ concatenated by columns.}
\begin{align}
[\theta^c, \theta^s]^* \coloneqq \mathop{\arg\min}\limits_{\theta^c,\theta^s}F(\theta^c,\theta^s).
\end{align}

\subsection{Gradient Analysis}
In FL, communication compression techniques such as unbiased random quantization and rand-$k$ sparsification are commonly utilized to preserve the unbiased gradients at the parameter server. 
However, even when these unbiased compression methods are applied to feature maps within the SL framework, the gradients are biased.

We state this formally as follows:
\begin{theorem}[Compressed Feature Maps Bring Biased Gradients]
\label{biasedgradient}
Even unbiased compression techniques ($\mathbb{E}(\epsilon_i)=0$) are applied to feature maps in SL, the outcome inevitably leads to biased gradients:
\begin{align}
    \mathbb{E}([\hat{g^c}, \hat{g^s}]) \neq [g^c, g^s]
\end{align}
\end{theorem}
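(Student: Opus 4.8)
The plan is to locate the source of the bias in a single structural fact: although the feature-map perturbation is unbiased, each gradient is a \emph{nonlinear} function of the perturbed feature map, and the expectation of a nonlinear map applied to a zero-mean random variable is not the map of its mean. Write $h_i := f_i^c(\theta_i^c; x_i)$ for the uncompressed smashed data, so the server receives $h_i + \epsilon_i$ with $\mathbb{E}(\epsilon_i) = 0$. Then the server gradient is $\hat{g_i^s} = \nabla_{\theta_i^s} f_i^s(\theta_i^s; h_i + \epsilon_i, y_i)$, while the client gradient factors as $\hat{g_i^c} = J_i^\top\, \nabla_h f_i^s(\theta_i^s; h_i + \epsilon_i, y_i)$, where $J_i := \nabla_{\theta_i^c} f_i^c(\theta_i^c; x_i)$ does not depend on $\epsilon_i$. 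The uncompressed counterparts $g_i^s$ and $g_i^c$ arise by setting $\epsilon_i = 0$.

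First I would Taylor-expand each gradient in $\epsilon_i$ about $h_i$. For the server component,
\begin{align}
\hat{g_i^s} = g_i^s + \big(\nabla_h \nabla_{\theta_i^s} f_i^s\big)\,\epsilon_i + \tfrac{1}{2}\,\epsilon_i^\top \big(\nabla_h^2 \nabla_{\theta_i^s} f_i^s\big)\,\epsilon_i + o(\|\epsilon_i\|^2),
\end{align}
with the analogous expansion for $\hat{g_i^c}$ obtained by replacing $\nabla_{\theta_i^s} f_i^s$ with $\nabla_h f_i^s$. Taking expectations and invoking $\mathbb{E}(\epsilon_i) = 0$ annihilates the first-order term, leaving (componentwise in the gradient index)
\begin{align}
\mathbb{E}(\hat{g_i^s}) - g_i^s = \tfrac{1}{2}\,\mathrm{tr}\!\big(\nabla_h^2 \nabla_{\theta_i^s} f_i^s \cdot \mathrm{Cov}(\epsilon_i)\big) + o(\mathbb{E}\|\epsilon_i\|^2).
\end{align}
The leading bias is thus the curvature of the gradient contracted against the perturbation covariance; averaging over clients and stacking the two blocks gives the bias of $[\hat{g^c}, \hat{g^s}]$. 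The structural point worth emphasizing is that this term vanishes identically only when $f_i^s$ is affine in $h$ (zero second derivatives), so the nonlinear activations present in any realistic DNN render it generically nonzero.

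Since the theorem asserts a strict inequality, I would anchor the argument with a minimal explicit instance rather than rest on genericity alone. Observe first that $\mathbb{E}([\hat{g^c}, \hat{g^s}]) \neq [g^c, g^s]$ holds as soon as \emph{either} block is biased, so it suffices to exhibit one. Taking a scalar server model with a quadratic nonlinearity, $f_i^s(\theta_i^s; h) = \tfrac{1}{2}(\theta_i^s h)^2$, gives $\nabla_{\theta_i^s} f_i^s = \theta_i^s h^2$, whence $\mathbb{E}(\hat{g_i^s}) - g_i^s = \theta_i^s\,\mathrm{Var}(\epsilon_i)$, which is nonzero for any $\theta_i^s \neq 0$ and any nondegenerate unbiased compressor. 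Replacing the quadratic by a generic smooth activation makes $\nabla_h f_i^s$ nonlinear in $h$ as well, so the client block inherits an analogous nonzero bias and both components are generally biased.

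The main obstacle I anticipate is not the computation but the logical quantifier: a strict inequality cannot hold for \emph{every} model, since a linear server model equates both sides, so the statement must be read as ``there exist SL instances, indeed the generic one, for which the gradients are biased.'' The delicate part of the write-up is therefore to pair the Taylor analysis, which identifies the mechanism and the leading-order bias for arbitrary smooth nonlinear models, with the concrete counterexample, which certifies that the bias is genuinely nonzero, while flagging the affine case as the sole degenerate exception. A secondary technical point is justifying the interchange of expectation with the Taylor remainder, which requires a mild smoothness-plus-bounded-moments assumption on $f_i^s$ and $\epsilon_i$ to control the $o(\cdot)$ term.
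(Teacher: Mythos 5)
Your proposal is correct and rests on the same underlying mechanism as the paper --- the gradient is a nonlinear function of the perturbed feature map, so expectation does not commute with it --- but the execution is genuinely different. The paper argues by contradiction: assuming $\mathbb{E}(\hat{g^s}) = g^s$ forces $\mathbb{E}(h(z)) = h(\mathbb{E}(z))$ for the gradient map $h$, which it refutes with a one-line Jensen-type counterexample (ReLU applied to a standard normal variable gives $\mathbb{E}(a(z)) > 0 = a(\mathbb{E}(z))$), and stops there. You instead Taylor-expand the gradient in $\epsilon_i$ and identify the leading-order bias explicitly as $\tfrac{1}{2}\,\mathrm{tr}\bigl(\nabla_h^2 \nabla_{\theta} f^s \cdot \mathrm{Cov}(\epsilon_i)\bigr)$, then certify nonvanishing with the quadratic model $f_i^s = \tfrac{1}{2}(\theta^s h)^2$, whose bias $\theta^s\,\mathrm{Var}(\epsilon_i)$ is exact (your example is polynomial, so no remainder control is even needed there). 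Your route buys more: a quantitative handle on the size of the bias (curvature contracted against compressor covariance, which dovetails with the paper's later error analysis in Lemma~\ref{gradientbound}), and an explicit identification of the affine server model as the sole degenerate case --- a quantifier issue the theorem's word ``inevitably'' glosses over and which the paper's proof only implicitly concedes via ``DNN typically includes many activation functions.'' The paper's route buys brevity and, notably, applies directly to nonsmooth activations: your general Taylor argument needs $f^s$ twice differentiable in $h$ plus moment conditions (which you rightly flag), and ReLU --- the paper's own example --- fails that smoothness; your concrete quadratic counterexample is what keeps your argument airtight despite this. One small caution: your claim that the trace term ``vanishes identically only when $f^s$ is affine'' is correct when quantified over all compressor covariances, but for a fixed compressor the leading term could cancel for a nonaffine model, so the counterexample, not the genericity argument, should carry the formal burden --- exactly as you structured it.
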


\begin{proof}
Assume that the unbiased server-side gradient is achieved, i.e., $\mathbb{E}(\hat{g^s}) = g^s$. 
Denote the original feature map of client $i$ as $z_{i}$ and its compressed version as $\hat{z_{i}}$, with the expectation $\mathbb{E}(\hat{z_i}) = z_i$. 
The server-side gradient is expressed as a function of the feature map, denoted as $h(\cdot)$, such that for the original feature map we have $g^s_i = h(z_i)$, and for the compressed feature map, $\hat{g^s_i} = h(\hat{z_i})$. 
It follows that $\mathbb{E}(\hat{g_i^s}) = \mathbb{E}(h(\hat{z_i}))$ and $h(\mathbb{E}(\hat{z_i}))=h(z_i)=g_i^s$.

If $\mathbb{E}(\hat{g_i^s}) = g_i^s$, for any server-side gradient function $h(\cdot)$ and any feature map $z$:
\begin{equation}
    \mathbb{E}(h(z)) = h(\mathbb{E}(z)).
    \label{eq:theorem_proof}
\end{equation}

Consider the widely used activation function ReLU, defined as $a(x) = \max(0, x)$, and let $z$ be a random variable that adheres to a standard normal distribution. 
We find that $\mathbb{E}(a(z)) = \int_0^{\infty}x\frac{1}{\sqrt{2\pi}}\exp^{-\frac{x^2}{2}}dx > 0$ but $a(\mathbb{E}(z)) = a(0) = 0$. 
Therefore $\mathbb{E}(a(z)) \neq a(\mathbb{E}(z))$. 
This observation contradicts Eq.~\eqref{eq:theorem_proof} because DNN typically includes many activation functions.

Thus, the assumption does not hold, which means that we can not obtain an unbiased estimate of the server-side gradient $\mathbb{E}(\hat{g^s})$ under the compression: $\mathbb{E}(\hat{g^s}) \neq g^s$.
Consequently, the expectation of the composite gradient $\mathbb{E}([\hat{g^c}, \hat{g^s}])$ does not equal the true gradients $[g^c, g^s]$.
\end{proof}

\subsection{Convergence Analysis}

In this section, we conduct a comprehensive convergence analysis of compressed feature maps in SL. 
Our analysis is based on assumptions commonly employed in distributed optimization~\cite{castiglia2022compressed, gruntkowska2023ef21, markov2023quantized}:

\begin{assumption}[Lipschitz Smoothness]
\label{ass_L}
There exists a finite positive constant $L$ such that for all $\theta^c_1, \theta^c_2$ in the client-side model and $\theta^s_1, \theta^s_2$ in the server-side model, the Lipschitz condition is satisfied as follows:
\begin{align}
\|\nabla F(\theta^c_1, \theta^s_1) - \nabla F(\theta^c_2, \theta^s_2)\| \le L \| [\theta^c_1, \theta^s_1] - [\theta^c_2, \theta^s_2] \|.
\end{align}
\end{assumption}

\begin{assumption}[Unbiased Gradients]
\label{ass_unbias}
For every mini-batch, the stochastic gradients of the client-side and server-side models are unbiased estimators of the true gradient:
\begin{align}
\mathbb{E}([g^c,g^s]) = \nabla F(\theta^c,\theta^s).
\end{align}
\end{assumption}

\begin{assumption}[Bounded Variance]
\label{ass_var}
The variances of the stochastic gradients in the client-side and server-side models are bounded by a finite positive constant $\sigma^2$:
\begin{align}
\mathbb{E}\|\nabla F(\theta^c,\theta^s)-[g^c, g^s]\|^2\le \sigma^2.
\end{align}
\end{assumption}

Given Assumptions~\ref{ass_L}-\ref{ass_var}, we establish the following lemma to bound the gradient errors in the client-side and server-side models.

\begin{lemma}[Bounded Gradient Error]\label{gradientbound}
Let \( J \) and \( H \) denote the upper bounds of the Frobenius norms of the Jacobian matrix for the client-side model and the Hessian matrix for the server-side model, respectively. 
Define $E \coloneqq \frac{1}{N}\sum_{i=1}^{N}\|\epsilon_i\|$, where $\epsilon_i$ represents the compression error for the $i$-th client. 
The gap in the server-side gradient \( g^s \) and its compressed counterpart \( \hat{g^s} \), as well as those in the client-side gradient \( g^c \) and \( \hat{g^c} \), are bounded as follows:
\begin{align}
\| \hat{g^s} - g^s \|\le HE,\quad \| \hat{g^c}-g^c\|\le HJE.
\end{align}
\end{lemma}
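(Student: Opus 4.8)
The plan is to treat both gradient gaps as differences of a vector field evaluated at the compressed and uncompressed feature maps, and to control each difference by the fundamental theorem of calculus along the segment joining $z_i$ and $\hat{z_i} = z_i + \epsilon_i$.

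First I would establish the server-side bound. Following the notation introduced for Theorem~\ref{biasedgradient}, write the per-client server gradient as a map of the feature map, $g_i^s = h(z_i)$ and $\hat{g_i^s} = h(\hat{z_i})$, where $h(z) = \nabla_{\theta_i^s} f_i^s(\theta_i^s; z, y_i)$. The Jacobian of $h$ with respect to $z$ is precisely the mixed block $\nabla^2_{z,\theta_i^s} f_i^s$ of the server-side Hessian, whose Frobenius norm is bounded by $H$. Writing $\hat{g_i^s} - g_i^s = \int_0^1 \nabla h(z_i + t\epsilon_i)\,\epsilon_i\,dt$ and pulling the norm inside the integral gives $\|\hat{g_i^s} - g_i^s\| \le H\|\epsilon_i\|$. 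Averaging over clients and using the triangle inequality yields $\|\hat{g^s} - g^s\| \le \frac{1}{N}\sum_{i=1}^{N} H\|\epsilon_i\| = HE$.

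For the client-side bound the key observation is that the client forward pass, and hence its Jacobian $J_i = \nabla_{\theta_i^c} f_i^c(\theta_i^c; x_i)$, is computed from the true input $x_i$ and is therefore unchanged by compression; only the gradient returned from the server differs. Thus $g_i^c = J_i^\top\, b(z_i)$ and $\hat{g_i^c} = J_i^\top\, b(\hat{z_i})$, where $b(z) = \nabla_z f_i^s(\theta_i^s; z, y_i)$ is the back-propagated gradient, so $\hat{g_i^c} - g_i^c = J_i^\top\bigl(b(\hat{z_i}) - b(z_i)\bigr)$. The Jacobian of $b$ is the Hessian block $\nabla^2_{z,z} f_i^s$, again bounded by $H$, so the same integral argument gives $\|b(\hat{z_i}) - b(z_i)\| \le H\|\epsilon_i\|$. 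Submultiplicativity of the Frobenius norm then yields $\|\hat{g_i^c} - g_i^c\| \le \|J_i\|\,H\|\epsilon_i\| \le JH\|\epsilon_i\|$, and averaging over clients gives $\|\hat{g^c} - g^c\| \le HJE$.

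The main obstacle I anticipate is not the algebra but the justification of the mean-value step: the ordinary mean value theorem fails for vector-valued maps, so I would invoke the fundamental theorem of calculus along $t \mapsto z_i + t\epsilon_i$ and bound $\bigl\|\int_0^1 \nabla h(z_i + t\epsilon_i)\,\epsilon_i\,dt\bigr\| \le \sup_{t \in [0,1]}\|\nabla h(z_i + t\epsilon_i)\|\,\|\epsilon_i\|$. This requires $C^1$-regularity of the gradient maps $h$ and $b$ (equivalently $C^2$-regularity of $f_i^s$) so that the Hessian blocks exist and their Frobenius norms are genuinely bounded by $H$ on the segments of interest. A secondary point worth stating cleanly is that a bound on the full server-side Hessian controls each of its sub-blocks $\nabla^2_{z,\theta_i^s} f_i^s$ and $\nabla^2_{z,z} f_i^s$, so the single constant $H$ suffices for both inequalities.
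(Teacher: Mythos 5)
Your proof is correct and follows essentially the same route as the paper's: both control the server- and client-side gradient gaps by a mean-value expansion of the server-side gradient map along the segment from $z_i$ to $z_i+\epsilon_i$, bounded via the Frobenius-norm constants $H$ and $J$, and your observation that the client-side Jacobian is computed from the true input and hence unchanged by compression plays exactly the role of the paper's ``identity matrix for approximate differentiation of the compression operation.'' The one refinement on your side is replacing the paper's appeal to the Lagrange mean value theorem---which, as you correctly note, fails for vector-valued maps---by the fundamental theorem of calculus in integral form along $t \mapsto z_i + t\epsilon_i$, which yields the same estimates rigorously.
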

\begin{proof}
Denote server-side hessian function as $H^s(\cdot)$. According to the Lagrange mean value theorem, there are a set of points $\xi_i(i=1,2\dots N)$ yields: 
\begin{align}
    &\| \hat{g^s}-g^s \| 
    \\
    =
    &\| \frac{1}{N}\sum_{i=1}^{N}(\hat{g_i^s}-g_i^s)\|
    \\
    =
    &\|\frac{1}{N}\sum_{i=1}^{N}(\nabla_{\theta_i^s} f^s(\theta_i^s;\hat{f_i^c}(\theta_i^c;x_i),y_i)  \nonumber
    \\
    &-\nabla_{\theta_i^s} f^s(\theta_i^s;f_i^c(\theta_i^c;x_i),y_i))\|
    \\
    =
    &\|\frac{1}{N}\sum_{i=1}^{N}H^s(\xi_i)^\top(\hat{f_i^c}(\theta_i^c;x_{i})-f_i^c(\theta_i^c;x_{i}))\|
    \\
    =
    &\|\frac{1}{N}\sum_{i=1}^{N}H^s(\xi_i)^\top \epsilon_i\|
    \\
    \le
    &\frac{1}{N}\sum_{i=1}^{N}\|H^s(\xi_i)\|_F\|\epsilon_i\|
    \le HE.
\end{align}

Use identity matrix for approximate differentiation of compression operation, and follow the chain derivation rule:
\begin{align}
    \quad\quad\quad&\|\hat{g^c}-g^c\|
    \\
    =
    &\|\frac{1}{N}\sum_{i=1}^N\hat{g_i^c}-g^c\|
    \\
    =
    &\|\frac{1}{N}\sum_{i=1}^N\nabla_{\theta_i^c}f^s(\theta^s;f_i^c(\theta_i^c;x_i)+\epsilon_i)-g^c\|
    \\
    =
    &\|\frac{1}{N}\sum_{i=1}^N\nabla_{f_i^c(\theta_i^c;x_i)} f^s(\theta^s;f_i^c(x_i;\theta_i^c)+\epsilon_i)^\top \nonumber
    \\
    &\nabla_{\theta_i^c}f_i^c(\theta_i^c;x_i)-g^c\|
    \\
    =
    &\|\frac{1}{N}\sum_{i=1}^N(\nabla_{f_i^c(\theta_t^c;x_i)} f^s(\theta^s;f_i^c(\theta_i^c;x_i))-g^c\|
    \\
    =
    &\|\frac{1}{N}\sum_{i=1}^N\nabla_{\theta_i^c}f^s(\theta^s;f_i^c(\theta^c_i;x_i) )
    \\
    &+\frac{1}{N}\sum_{i=1}^NH^s(\xi_i)\epsilon_i^\top\nabla_{\theta_i^c}f_i^c(x_i;\theta_i^c)-g^c\|
    \\
    =
    &\|\frac{1}{N}\sum_{i=1}^N g_i^c+\frac{1}{N}\sum_{i=1}^NH^s(\xi_i)\epsilon_i^\top\nabla_{\theta_i^c}f_i^c(x_i;\theta_i^c)-g^c\|
    \\
    =
    &\|\frac{1}{N}\sum_{i=1}^NH^s(\xi_i)\epsilon_i^\top\nabla_{\theta_i^c}f_i^c(x_i;\theta_i^c)\|
    \\
    \le&\frac{1}{N}\sum_{i=1}^{N}\|H^s(\xi_i)\|_F\|\epsilon_i\|\|\nabla_{\theta_i^c}f_i^c(x_i;\theta_i^c)\|
    \le HJE. 
\end{align}

Therefore, Lemma~\ref{gradientbound} is proven.

\end{proof}

Building on Assumptions~\ref{ass_L}-\ref{ass_var} and Lemma~\ref{gradientbound}, we derive the convergence rate in SL as follows:

\begin{theorem}
[Convergence in SL] 
\label{convergenceserverclient}
Under Assumptions~\ref{ass_L}-\ref{ass_var} and Lemma~\ref{gradientbound}, use subscripts to represent the number of iterations. Consider $\varepsilon^2=(1+J^2)H^2\frac{1}{T}\sum_{t=1}^{T}E_t$ and $\gamma = \mathbb{E}(F(\theta_{1}^c,\theta_{1}^s)-F(\theta_{T+1}^c,\theta_{T+1}^s))$. By choosing the learning rate 
$
    \eta = \sqrt{\frac{\gamma}{TL(\sigma^2+\varepsilon^2)}}
$ and ensuring that the inner product between the error gradient (e.g., $[\hat{g^c},\hat{g^s}]-[g^c,g^s]$) and the full gradient (e.g., $\nabla_{[\theta^c,\theta^s]} F(\theta^c,\theta^s)$) is always non-negative, with $\eta\in(0,\frac{1}{2L}]$, SL can achieve the following convergence rate:
\begin{align}    
    \frac{1}{T}\sum_{t=1}^{T}\mathbb{E}\|\nabla F(\theta_t^c,\theta_t^s)\|^2\le 4\sqrt{\frac{\gamma L(\sigma^2+\varepsilon^2)}{T}}.
\end{align}
In other scenarios, with $\eta\in(0,\frac{1}{4L}]$, the convergence rate is given by:
\begin{align}
    \frac{1}{T}\sum_{t=1}^{T}\mathbb{E}\|\nabla F(\theta_t^c,\theta_t^s)\|^2\le 8\sqrt{\frac{\gamma L(\sigma^2+\varepsilon^2)}{T}}+2\varepsilon^2.
\end{align}
\end{theorem}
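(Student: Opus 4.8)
The plan is to run the standard descent analysis for $L$-smooth nonconvex objectives, but to track the compression bias separately from the stochastic noise. Writing $\Theta_t = [\theta_t^c, \theta_t^s]$ for the concatenated parameters and $\hat{G}_t = [\hat{g^c}, \hat{g^s}]_t$ for the update direction, the iteration is $\Theta_{t+1} = \Theta_t - \eta \hat{G}_t$. Applying Assumption~\ref{ass_L} gives the one-step inequality $F(\Theta_{t+1}) \le F(\Theta_t) - \eta\langle \nabla F(\Theta_t), \hat{G}_t\rangle + \frac{L\eta^2}{2}\|\hat{G}_t\|^2$. The whole difficulty is that $\hat{G}_t$ is biased (precisely the content of Theorem~\ref{biasedgradient}), so I first decompose it as $\hat{G}_t = \nabla F(\Theta_t) + \xi_t + b_t$, where $\xi_t = [g^c,g^s]_t - \nabla F(\Theta_t)$ is the zero-mean stochastic noise and $b_t = \hat{G}_t - [g^c,g^s]_t$ is the compression error.

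Two facts then feed the estimate. First, combining the two inequalities of Lemma~\ref{gradientbound} coordinatewise gives $\|b_t\|^2 = \|\hat{g^s}-g^s\|^2 + \|\hat{g^c}-g^c\|^2 \le (1+J^2)H^2E_t^2$, whose time-average is the quantity $\varepsilon^2$ in the statement. Second, Assumption~\ref{ass_unbias} makes the noise cross term vanish in expectation, so $\mathbb{E}\langle \nabla F(\Theta_t), [g^c,g^s]_t\rangle = \|\nabla F(\Theta_t)\|^2$, while Assumption~\ref{ass_var} bounds $\mathbb{E}\|\xi_t\|^2 \le \sigma^2$. Using $\|\hat{G}_t\|^2 \le 2\|[g^c,g^s]_t\|^2 + 2\|b_t\|^2$ yields $\mathbb{E}\|\hat{G}_t\|^2 \le 2\|\nabla F(\Theta_t)\|^2 + 2\sigma^2 + 2(1+J^2)H^2E_t^2$. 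After substitution, the only uncontrolled object is the bias cross term $-\eta\,\mathbb{E}\langle \nabla F(\Theta_t), b_t\rangle$, and this is the crux of the argument and the origin of the two cases.

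In the favorable case I would invoke the stated hypothesis that $\langle \hat{G}_t - [g^c,g^s]_t,\ \nabla F(\Theta_t)\rangle \ge 0$, i.e.\ $\mathbb{E}\langle \nabla F(\Theta_t), b_t\rangle \ge 0$, so the term is simply dropped. What remains is $\mathbb{E}F(\Theta_{t+1}) \le \mathbb{E}F(\Theta_t) - \eta(1-L\eta)\mathbb{E}\|\nabla F(\Theta_t)\|^2 + L\eta^2(\sigma^2 + (1+J^2)H^2E_t^2)$; choosing $\eta \le \tfrac{1}{2L}$ makes $1-L\eta \ge \tfrac12$, and telescoping from $t=1$ to $T$ against $\gamma = \mathbb{E}(F(\Theta_1)-F(\Theta_{T+1}))$ gives $\frac{1}{T}\sum_t \mathbb{E}\|\nabla F(\Theta_t)\|^2 \le \frac{2\gamma}{\eta T} + 2L\eta(\sigma^2+\varepsilon^2)$. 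Plugging in $\eta = \sqrt{\gamma / (TL(\sigma^2+\varepsilon^2))}$ balances the two terms and yields the rate $4\sqrt{\gamma L(\sigma^2+\varepsilon^2)/T}$.

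In the general case the cross term need not have a sign, so I would control it by Young's inequality, $-\eta\langle \nabla F(\Theta_t), b_t\rangle \le \frac{\eta}{2}\|\nabla F(\Theta_t)\|^2 + \frac{\eta}{2}\|b_t\|^2$. The extra $\frac{\eta}{2}\|\nabla F(\Theta_t)\|^2$ eats into the negative gradient term, leaving coefficient $-\eta(\tfrac12 - L\eta)$, which is exactly why the step size must shrink to $\eta \le \tfrac{1}{4L}$ so that $\tfrac12 - L\eta \ge \tfrac14$; meanwhile the $\frac{\eta}{2}\|b_t\|^2$ contribution averages to the non-vanishing floor $2\varepsilon^2$. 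Telescoping and optimizing $\eta$ exactly as before produces $8\sqrt{\gamma L(\sigma^2+\varepsilon^2)/T} + 2\varepsilon^2$. I expect the main obstacle to be the honest bookkeeping of this bias cross term: unlike the unbiased FL setting it cannot be annihilated by taking expectations, so the proof must either assume it helps (clean convergence) or pay for it with an irreducible floor of order $\varepsilon^2$ — which is precisely the theorem's message that a smaller compression error $E_t$, hence smaller $\varepsilon^2$, drives faster convergence.
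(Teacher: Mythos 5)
Your proposal is correct and follows essentially the same route as the paper's proof in Appendix~A: the same smoothness-based one-step bound, the same decomposition of the update into true gradient plus zero-mean noise plus compression bias (with the bias bounded via Lemma~\ref{gradientbound} as $\|b_t\|^2\le(1+J^2)H^2E_t^2$), the same case split --- dropping the cross term under the sign hypothesis with $\eta\le\frac{1}{2L}$, versus Young's inequality with $\eta\le\frac{1}{4L}$ yielding the $2\varepsilon^2$ floor --- and the same telescoping and step-size balancing. There is no substantive difference from the paper's argument.
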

\begin{proof}
    Please refer to the Appendix A of~\cite{2024mask} for the proof of Theorem~\ref{convergenceserverclient}.
\end{proof}

\begin{remark}
Theorem~\ref{biasedgradient} indicates the effect of feature map compression on gradient bias within the SL framework. 
This phenomenon persists even when employing ostensibly unbiased compression methodologies, and such a biased gradient hurts the model's convergence. 
Complementing this, Theorem~\ref{convergenceserverclient} delineates how the compression errors directly influence the convergence rates of both the client and server-side models. 
These conclusions indicate that previous compression techniques, such as quantization and sparsification, are unsuitable for SL because they can not obtain unbiased gradients and introduce substantial errors. 
\end{remark}

\section{Mask-Encoded Sparsification}
In this section, we introduce the mask-encoded sparsification (MS) algorithm and provide theoretical and experimental evidence to demonstrate that the algorithm's error is lower than that of traditional methods.

\subsection{Algorithm Details}
Inspired by the max pooling algorithm, we believe that the top values within a feature map are critical in DNNs.
However, sparsification leads to significant compression errors, especially with a higher sparsification ratio. 
As pointed out in Theorem~\ref{convergenceserverclient}, this can negatively affect the model's convergence.
To address this, we propose a new method to weaken the values lost due to sparsification, thus reducing the impact of compression errors.

For vanilla Top-$k$ sparsification, we use a mask that matches the size of the feature map.
We set bits in the mask to $0b1$ for the top values and $0b0$ for the filtered values, storing all top values in order. 
During the decoding process, we traverse the mask and sequentially fill in the top values wherever the mask has a bit $0b1$, replacing all other values with $0b0$. 
This sparse matrix storage method costs less than the key-value pair storage method when the sparsification ratio is below $96.875\%$ (Please refer to the Appendix B of~\cite{2024mask} for more details).

In our method, we expand the mask bit width to \( b \) and assign a binary number composed of \( b \) ones to the position mask of the top values. 
This approach allows the additional bits to compensate for the values filtered by Top-$k$ sparsification.
When the activation value is non-negative (such as after applying the ReLU function), all filtered values range from 0 to the smallest top value, allowing us to divide this range into \( 2^b-1 \) segments evenly. 
For a filtered value \( t \), the corresponding mask position records the binary form of \( \lfloor t \times (2^b - 1) / Top_{\text{min}} \rfloor \), where \( Top_{\text{min}} \) is the smallest top value. If the activation values include negative numbers, an additional bit is needed in the mask to record the sign, with all values treated as non-negative.
The client transmits top values and the encoded mask to the server. 

Upon receiving this data, the server first identifies the minimum top value \( Top_{\text{min}} \). Then, it iterates through the mask. Whenever it encounters a number where all bits are $1$, the server pops the first value from the data vector and assigns it to the corresponding position. 
For other mask values, the server multiplies the mask value by \( Top_{\text{min}} / (2^b - 1) \) to reconstruct the filtered value.
For a detailed explanation of this process, refer to Algorithm~\ref{encode} and Algorithm~\ref{decode}.

\begin{algorithm}[htbp]
\caption{Encoding Stage of Mask-Encoded Sparsification}
\label{encode}
\textbf{Input}: $d$-dimensional feature map $x$\\
\textbf{Parameter}: Sparsification ratio $r$, mask bit width $b$\\
\textbf{Output}: Vector of top values $v$, encoded mask $m$

\begin{algorithmic}[1]
\STATE Conduct Top-$k$ sparsification on $x$, with $k=\lfloor(1-r) \times d\rfloor$. Record the smallest top value as $Top_{\text{min}}$ and arrange the top values in order to form vector $v$.
\STATE Initialize a $d$-dimensional mask with $b$ bits per position.
\FOR{$i = 1$ to $d$}
    \IF{$x_{i} \ge Top_{\text{min}}$}
        \STATE $m_i \gets 2^b-1$.
    \ELSE
        \STATE $m_i \gets \lfloor x_i \times (2^b - 1) / Top_{\text{min}} \rfloor$.
    \ENDIF
\ENDFOR
\STATE \textbf{return} vector $v$ and mask $m$.
\end{algorithmic}
\end{algorithm}

\begin{algorithm}[htbp]
\caption{Decoding Stage of Mask-Encoded Sparsification}
\label{decode}
\textbf{Input}: Vector of top values $v$, encoded mask $m$\\
\textbf{Parameter}: Mask bit width $b$, dimension of feature map $d$\\
\textbf{Output}: Decoded feature map $\hat{x}$

\begin{algorithmic}[1]
\STATE Initialize $\hat{x}$ with the same dimension as $m$.
\STATE Retrieve the minimum top value $Top_{\text{min}}$ from $v$.
\FOR{$i = 1$ to $d$}
    \IF{$m_{i} = 2^b - 1$}
        \STATE Pop the first value $t$ from $v$.
        \STATE $\hat{x}_i \gets t$.
    \ELSE
        \STATE $\hat{x}_i \gets m_i \times Top_{\text{min}} / (2^b - 1)$.
    \ENDIF
\ENDFOR
\STATE \textbf{return} the reconstructed feature map $\hat{x}$.
\end{algorithmic}
\end{algorithm}

In Fig.~\ref{sl}, the client generates a $2$-bit mask with the same dimensions as the feature map. Subsequently, Top-$4$ sparsification is applied to the feature map, resulting in a data vector containing the top values in their original order. 
The corresponding mask position is assigned a binary number with all $1$ bits (here is $0b11$). 
The smallest top value recorded during this process is $2.1$. 
Additionally, the client uniformly maps all filtered values within the $[0, 2.1]$ range to integers ranging from $0$ to $3$ (where $3 = 2^2 - 1$). 
These mapped values are converted into binary numbers and placed within the corresponding mask. 
Upon receiving the encoded mask and data vector, the server performs the following steps to decode the data. 
Firstly, it identifies the smallest top value of $2.1$. 
Next, it iterates through the mask. If a mask value of all $0b11$ is encountered, the server pops the first element from the data vector and fills it into the corresponding position. 
Otherwise, the mask value is mapped to the interval $[0, 2.1]$ and filled into the recovered feature map. 
By following this process, the server successfully obtains the decoded data.

MS effectively compresses the data size through this approach, reducing communication overhead.
The example illustrated in Fig.~\ref{sl} demonstrates a reduction in traffic from $64$ bytes to $20$ bytes compared to uncompressed data. 
Furthermore, it decreases the 2-norm compression error from $4.06$ to $1.02$ compared to vanilla Top-$k$ sparsification.

\subsection{Error Analysis}

We compare the error rates of our new algorithm (MS) with a couple of existing ones: Top-$k$ sparsification (SP)~\cite{alistarh2018convergence,pmlr-v202-xu23v,Zhou_2023_ICCV}, quantization (QU)~\cite{alistarh2017qsgd,pmlr-v202-dorfman23a,10228970}, and randomized Top-$k$ sparsification (RT)~\cite{rt}.

\paragraph{Error upper bound comparison}
In the RT method, sparsification is achieved by repeated sampling, with the top values having a higher probability of being selected. 
It can be proven that RT introduces more compression errors than SP. Therefore, we do not consider the compression error between RT and MS. 

Assuming that the feature map is \( x \) of length \( d \), where each value has a bit width of \( f \), the values in QU are mapped to a bit width of \( q_1 \). 
In SP, \( k_1 \) top values are retained, while in the MS approach, \( k_2 \) top values are kept, utilizing a $q_2$-bit mask. 
The compression ratios of these three algorithms, QU, SP, and MS, are equal when the conditions \( q_1d = q_2d + fk_2 \) and \( d + fk_1 = q_2d + fk_2 \) are met.

The error upper bounds for QU and SP are defined by Eq.~\eqref{qu_error}~\cite{alistarh2017qsgd} and Eq.~\eqref{sp_error}~\cite{alistarh2018convergence}, respectively. 
In our method, the error primarily arises from the compensation of values filtered out by SP. 
Consequently, the upper bound of the compression error in MS adheres to Eq.~\eqref{ms_error}. Here, \( \alpha \) represents the ratio of the 2-norm of the values filtered by SP to the $2$-norm of \( x \).
\begin{align}
    \mathbb{E}\|QU(x)-x\|^2 &\le \sqrt{d}/(2^{q_1}-1)\|x\|^2, \label{qu_error}
    \\
    \mathbb{E}\|SP(x)-x\|^2&\le(d-k_1)/d\|x\|^2, \label{sp_error}
    \\
    \mathbb{E}\|MS(x)-x\|^2 &\le \alpha\sqrt{d-k_2}/(2^{q_2}-1)\|x\|^2. \label{ms_error}
\end{align}

\begin{theorem}
[Conditions superior to QU and SP]
\label{better_than_qu_sp}
With the increase of $q_2$, the upper bound error of MS is less than SP.
When $\alpha\in(0, 1/2)$ and $k_2/d \rightarrow 0$, the upper bound error of MS is less than QU. 
\end{theorem}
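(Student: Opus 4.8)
The plan is to treat Theorem~\ref{better_than_qu_sp} as two independent pairwise comparisons of the closed-form upper bounds in Eq.~\eqref{qu_error}--\eqref{ms_error}, each carried out at a \emph{matched} communication budget so that the comparison is fair. Concretely, I would first use the two equal-compression conditions $q_1 d = q_2 d + f k_2$ and $d + f k_1 = q_2 d + f k_2$ to eliminate the QU and SP parameters in favour of the MS parameters, writing $q_1 = q_2 + f k_2/d$ and $k_1 = k_2 + (q_2-1)d/f$. Substituting these into Eq.~\eqref{qu_error} and Eq.~\eqref{sp_error} expresses all three bounds in the common variables $d, f, q_2, k_2, \alpha$, after which each claim reduces to a single scalar inequality.

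For the SP comparison I would substitute $k_1$ so that the SP bound becomes $(d-k_1)/d = (d-k_2)/d - (q_2-1)/f$, a quantity that is fixed (and strictly positive in the non-degenerate regime where sparsification still discards a constant fraction of coordinates). The MS coefficient $\alpha\sqrt{d-k_2}/(2^{q_2}-1)$, by contrast, is strictly decreasing in $q_2$ and tends to $0$ as $q_2$ grows, because the denominator $2^{q_2}-1$ increases geometrically while the numerator is independent of $q_2$. Hence there is a threshold value of $q_2$ beyond which the MS bound drops below the positive SP bound, which is exactly the monotone statement ``with the increase of $q_2$.'' The only care needed is to restrict attention to the range of $q_2$ for which the matched SP configuration still has $k_1 < d$, i.e. SP genuinely compresses, since otherwise the SP error degenerates to $0$ and the comparison is vacuous.

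For the QU comparison I would form the ratio of Eq.~\eqref{ms_error} to Eq.~\eqref{qu_error}, namely $\alpha\sqrt{d-k_2}\,(2^{q_1}-1)\big/\big(\sqrt{d}\,(2^{q_2}-1)\big)$, and pass to the limit $k_2/d\to 0$. In this limit $\sqrt{d-k_2}/\sqrt{d}=\sqrt{1-k_2/d}\to 1$ and, since $q_1=q_2+fk_2/d\to q_2$, the factor $(2^{q_1}-1)/(2^{q_2}-1)\to 1$; the ratio therefore converges to $\alpha$. As $\alpha<1/2<1$, the ratio is strictly below $1$, giving the MS bound smaller than the QU bound. I would then observe that the two correction factors act in opposite directions once $k_2/d$ is small but nonzero: $\sqrt{1-k_2/d}<1$ helps MS, whereas $q_1>q_2$ makes QU's denominator larger and thus works against MS; the hypothesis $\alpha<1/2$ supplies the slack needed to absorb this adverse term and retain a strict inequality away from the exact limit.

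The main obstacle I anticipate is \emph{not} any single calculation but keeping the comparisons honest under the equal-compression coupling: because $q_1$ and $k_1$ are tied to $q_2$ and $k_2$ through the budget constraints, one cannot vary a single parameter in isolation, and the validity windows (e.g. $k_1\le d$, integer bit widths, and $\alpha\le 1$ inherited from $\alpha=\|x_{\text{filtered}}\|/\|x\|$) must be tracked so that each stated bound remains the operative one. Quantifying precisely how small $k_2/d$ must be for a given target slack — that is, converting the limiting statements into an explicit finite-sample threshold — is the part most likely to invoke the stated ``mild conditions'' and is where I would spend the most effort.
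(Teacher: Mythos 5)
Your proposal is correct and follows essentially the same route as the paper's Appendix~C: both eliminate $q_1$ and $k_1$ via the matched-budget constraints $q_1 = q_2 + f k_2/d$ and $k_1 = k_2 + (q_2-1)d/f$, settle the QU comparison by showing the ratio of bounds tends to $\alpha$ as $k_2/d \to 0$ (the paper lower-bounds $(2^{q_2}-1)/(2^{q_1}-1)$ by $2^{-1-fk_2/d}\ge 1/2$ in the limit, which is exactly why the hypothesis is $\alpha<1/2$ rather than merely $\alpha<1$), and settle the SP comparison by letting the geometric factor $2^{q_2}-1$ in the MS denominator dominate as $q_2$ grows. One minor slip worth noting: after your own substitution the SP bound $(d-k_2)/d - (q_2-1)/f$ is \emph{not} fixed but decreases linearly in $q_2$, vanishing precisely at the boundary $k_1 = d$ of the validity window you correctly flag, so the threshold claim requires staying bounded away from that edge where geometric decay of the MS coefficient outpaces the linear decrease of the SP bound --- a boundary effect the paper's own proof (which bounds the MS term via $\alpha\exp(d/f)$ and lets $(2^{q_2}-1)/d$ grow without tracking $k_1\le d$) glosses over even less carefully than you do.
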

\begin{proof}
        Please refer to the Appendix C of~\cite{2024mask} for the proof of Theorem~\ref{better_than_qu_sp}.
\end{proof}

\begin{figure*}[htbp]
    \centering
    \includegraphics[width=0.7\textwidth]{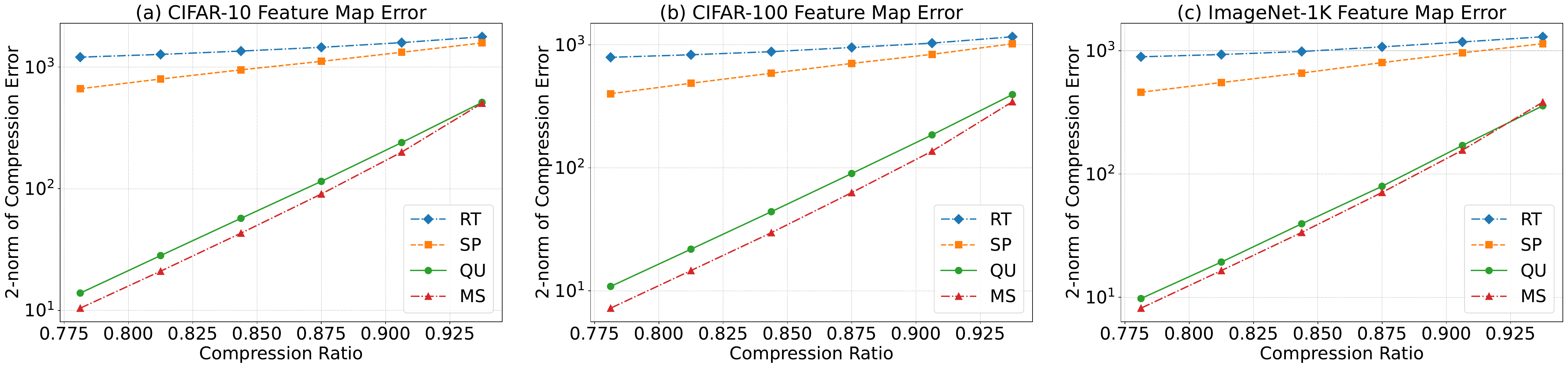}
    \caption{The average 2-norm compression error of feature maps for QU, SP, MS, and RT on different datasets as the compression rate increases. Figures (a), (b), and (c) show the comparison of CIFAR-10, CIFAR-100, and ImageNet-1K, respectively.}
    \label{error_compare}
\end{figure*}

\paragraph{Error comparison experiment}

We examined the compression errors of feature maps on CIFAR-10, CIFAR-100, and ImageNet-1K, employing SP, QU, MS, and RT algorithms at identical compression levels. The comparison is illustrated in Fig.~\ref{error_compare}. 

Notably, the compression error of MS is always lower than QU, SP, and RT. 
MS compensates for the values filtered out by SP through the mask, resulting in an error lower than SP.
Additionally, MS preserves top values, thus avoiding the sensitivity of quantization to outliers, and the error is also lower than quantization. The experiment also proves that the compression error of RT is consistently higher than that of SP at the same compression rate.

In addition, the experiment also verified the Theorem~\ref{better_than_qu_sp} that as $q_2$ increases (compression rate decreases), the gap between the compression error of MS and SP gradually increases. 
When the sparsification rate increases (compression rate increases), the compression error of MS is lower than that of QU.

\begin{remark}
Building on Theorems~\ref{better_than_qu_sp}, it is observed that MS achieves a lower upper bound on the compression error compared to QU and SP. 
This performance improvement is noted under conditions of increased sparsification ratio and expanded mask bit width, which are relatively mild constraints. 
Furthermore, Fig.~\ref{error_compare} illustrates that MS outperforms QU, SP and RT regarding compression error.
The theoretical insights and empirical results consistently underscore the superiority of MS over QU, SP, and RT in terms of compression error. This advantage facilitates improved convergence in SL.
\end{remark}

\subsection{Time Complexity}
During the encoding phase, SP exhibits a time complexity of $\mathcal{O}(d \log(k))$, attributed to the Top-$k$ operation. 
Here, $d$ denotes the feature map length, and $k$ represents the retained top values.
Both MS and RT are improved versions based on SP. 
MS introduces an additional operation of mask encoding, with a time complexity of $\mathcal{O}(d)$, so the total time complexity of MS is $\mathcal{O}(d\log(k) + d)$.
However, RT introduces $k$ sampling operations, where the time complexity of each sampling is $\mathcal{O}(d)$, and the total time complexity of RT is $\mathcal{O}(d\log(k) + dk)$. Consequently, MS provides powerful performance without increasing the complexity of the time order. In contrast, RT increased the time complexity of SP from $\mathcal{O}(d\log(k))$ to $\mathcal{O}(dk)$, which affected the training efficiency of SL.

Some compression methods, such as clustering quantization~\cite{abernathy2022incremental} and learning-based quantization~\cite{gong2019differentiable}, may achieve lower compression errors than MS but at the expense of a higher time complexity during both encoding and decoding phases. 
This drawback renders them less suitable for SL, where iterative training is frequent and time efficiency is paramount.

\begin{figure*}[htbp]
    \centering
    \includegraphics[width=0.73\textwidth]{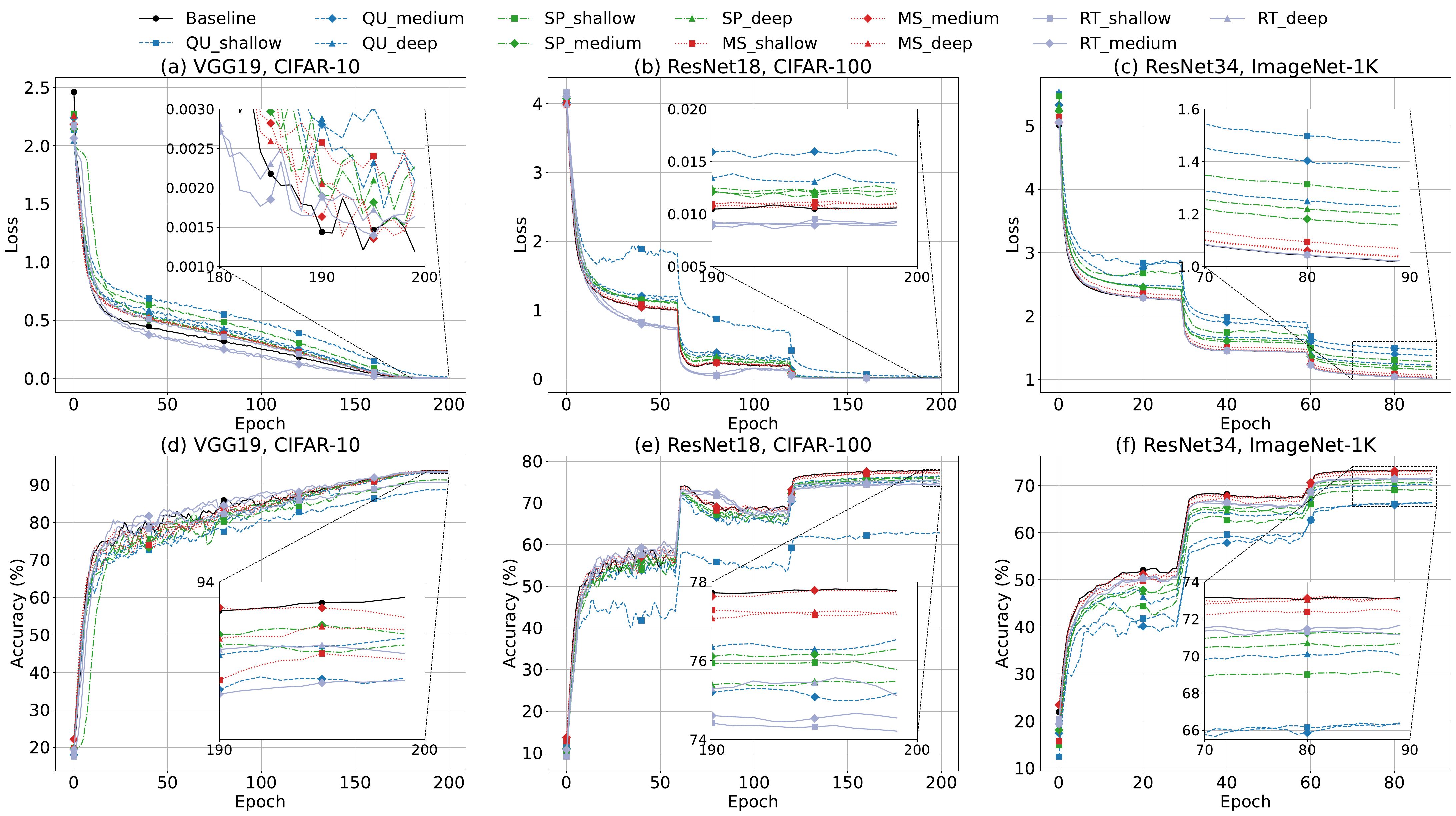}
    \caption{The experiment results of convergence and generalization. The first and second rows represent the changes in training loss and testing accuracy, respectively. The first, second, and third columns represent the results of CIFAR-10, CIFAR-100, and ImageNet-1K, respectively. The solid black line indicates the baseline trend, the blue dotted line depicts the changes in QU, the green stippled line shows the changes in SP, and the red dotted line represents the changes in MS.  Additionally, square, diamond, and triangular markers correspondingly denote shallow, middle, and deep cutlayers.}
    \label{exp_result}
    \vspace{5pt}
\end{figure*}

\section{Experiments}

\subsection{Experimental Setup}
\paragraph{Devices}
To simulate a realistic edge environment, we use HUAWEI Atlas DK200 as the client, connected to an NVIDIA GeForce RTX 3090 server via a 100Mbps wireless network.

\paragraph{Benchmarks}
We train three models on three datasets: VGG19~\cite{simonyan2014very} on CIFAR-10~\cite{cifar}, ResNet18~\cite{he2016deep} on CIFAR-100~\cite{cifar}, and ResNet34~\cite{he2016deep} on ImageNet-1K~\cite{russakovsky2015imagenet}. 
We evenly cut the three datasets into ten parts and put them into ten clients.
For these three models, we apply three cutting strategies, which are shallow, medium, and deep cuts. Specifically, for VGG19, we cut the model after layers $2$, $8$, and $15$, respectively; for ResNet18, we cut the model after layers $2$, $9$, and $13$, respectively; and for ResNet34, we cut the model after layers $2$, $15$, and $27$, respectively. 

\paragraph{Compression Policies}
When using MS compression, the sparsification ratio is $99\%$, the bit width is $2$, the compression rate is $92.75\%$.
We compare $95.875\%$ SP and $95.875\%$ RT at our MS's compression level. 
Due to the unsmooth compression ratio of quantization, we use $3$-bit quantization with a compression rate of $90.625\%$. 
We also recorded the uncompressed version as the baseline.

\paragraph{Metrics}
We compare the convergence and generalization of the models by analyzing the average training loss and test accuracy curves as the epoch increases. 
In addition, we also compare the traffic multiples saved by all compression strategies to achieve baseline accuracy.

\subsection{Experimental Results}

\paragraph{Convergence and Generalization}
The convergence and generalization performance for VGG19, ResNet18, and ResNet34 are illustrated in Fig.~\ref{exp_result}, respectively. 
In these figures, the solid black line indicates the baseline trend, the blue dotted line depicts the changes in QU, the green stippled line shows the changes in SP, and the red dotted line represents the changes in MS. 
Additionally, square, diamond, and triangular markers correspondingly denote shallow, middle, and deep cutlayers.

As depicted in Fig.~\ref{exp_result}, it is observed that in most situations, QU, SP, and struggle to converge to the baseline performance level and fail to achieve baseline accuracy (only medium and deep cuts in VGG19, QU, and SP can reach the baseline level).
RT can converge better than MS in certain situations (ResNet18 and ResNet34), but its generalization is still poor.
These phenomenons underscore the limitations of QU, SP, and RT in the context of feature map compression and are due to the vast compression errors induced by QU, SP, and RT.
On the contrary, MS can match the baseline in terms of convergence and generalization in most situations (only the situation of shallow cut in ResNet34 is slightly lower than the baseline level, and its accuracy is approximately $0.5\%$ less) because MS ensures the integrity of the top value and achieves lower compression error. 
These results highlight the superiority of the MS algorithm in compressing feature maps.

We also discover an interesting phenomenon: all compression algorithms' convergence and generalization abilities tend to improve as the cutting layer deepens. 
This phenomenon suggests that the shallow layers of neural networks are more sensitive to errors, whereas the deeper layers exhibit greater error tolerance. 

\begin{table}[htbp]
    \centering
    \caption{Comparison of communication traffic savings achieved by various compression algorithms compared to non-compression baseline, with "INF" indicating failure to achieve baseline accuracy level.} 
    \label{traffictab}
    \begin{tabular}{lllllll}
        \toprule
        \multirow{2}{*}{Datasets} &
        \multirow{2}{*}{Cutlayers} & \multicolumn{4}{c}{Traffic Saving Multiplier} \\
        \cmidrule(lr){3-6}
        & & QU & SP & MS & RT \\
        \midrule
        \multirow{3}{*}{CIFAR-10} &
        Shallow & INF & INF & \textbf{13.11$\times$} & INF\\
        & Medium & 9.92$\times$ & 13.56$\times$ & 13.56$\times$ & \textbf{13.71$\times$} \\
        & Deep & 10.37$\times$ & 13.56$\times$ & 13.63$\times$ & \textbf{13.71$\times$} \\
        \midrule
        \multirow{3}{*}{CIFAR-100} &
        Shallow & INF & INF & \textbf{12.53$\times$} & INF\\
        & Medium & INF & INF & \textbf{13.50$\times$} & INF\\
        & Deep & INF & INF & \textbf{12.29$\times$} & INF\\
        \midrule
        \multirow{3}{*}{ImageNet-1K} &
        Shallow & INF & INF & INF & INF \\
        & Medium & INF & INF & \textbf{13.20$\times$} & INF \\
        & Deep & INF & INF & \textbf{12.32$\times$} & INF\\
        \bottomrule
    \end{tabular}
\end{table}

\paragraph{Communication Efficiency}
We record the communication traffic needed to reach the baseline test accuracy in all experiments, as shown in Table~\ref{traffictab}. 
Bold values for communication traffic indicate the minor overhead each compression policy requires to meet the target test accuracy. 
We mark cases where an experiment does not reach the target accuracy under the communication compression policy with "INF".

Overall, QU, SP and RT often fail to achieve baseline accuracy and can only converge on less complex datasets such as CIFAR-10. This is because they bring higher compression errors.
MS consistently meets the baseline accuracy in nearly all tests. Compared to the other policies, MS typically requires minor communication traffic, except in the shallow cutlayer of ResNet34, where it achieved about $72.6\%$ accuracy ($0.5\%$ lower than baseline). 
These results highlight MS's efficiency in ensuring convergence and significantly reducing communication overhead.

\section{Conclusion}
In this work, we show that compressed feature maps introduce biased gradients in SL and provide a comprehensive convergence analysis in the face of feature map compression. 
Our findings confirm the critical role of compression error in the convergence process, revealing the inherent limitations of existing compression methodologies within SL frameworks. 
To address these shortcomings, we propose the MS algorithm, which employs a narrow bit width mask to mitigate sparsification errors, a strategy that keeps accuracy without adding algorithmic complexity. Our theoretical and empirical evidence shows that the MS algorithm outperforms conventional techniques, effectively diminishing compression errors while maintaining the same compression level. 
Extensive experiments on various DNN models and datasets also demonstrate the effectiveness and efficiency of MS.

\begin{ack}
This work was partially supported by the National Natural Science Foundation of China (No. 62102131, 62306104, and 62180005), the Natural Science Foundation of Jiangsu Province of China (No. BK20210361, BK20230949, and BK20220973), the Provincial Key Research and Development Program of Hubei (No. 2023BAB065), China Postdoctoral Science Foundation (2023TQ0104), Jiangsu Excellent Postdoctoral Program (2023ZB140), and the Fundamental Research Funds for the Central Universities (No. B240201064).

\end{ack}

\bibliography{references}

\clearpage
\onecolumn
\appendix

\section{Proof of Theorem~\ref{convergenceserverclient}}
\label{appendix:convergence}
\begin{proof}
We denote $t$ and $t+1$ (the subscript) as the iteration round. From Assumption \ref{ass_L}, in the $t$-th iteration, we can derive:
\begin{align}
    \mathbb{E}(F(\theta_{t+1}^c,\theta_{t+1}^s)-F(\theta_t^c,\theta_t^s))
    &\le
    \mathbb{E}\langle \nabla F(\theta_t^c,\theta_t^s),[\theta_{t+1}^c,\theta_{t+1}^s]-[\theta_t^c,\theta_t^s]\rangle 
    +
    \frac{L}{2}\mathbb{E}\|[\theta_{t+1}^c,\theta_{t+1}^s]-[\theta_t^c,\theta_t^s]\|^2
    \\
    &=
    \label{441}
    -\eta\mathbb{E}\langle \nabla F(\theta_t^c,\theta_t^s), [\hat{g_t^c},\hat{g_t^s}]\rangle
    +
    \frac{L\eta^2}{2}\mathbb{E}\|[\hat{g_t^c},\hat{g_t^s}]\|^2
    \\
    &=
    -\eta\mathbb{E}\langle \nabla F(\theta_t^c,\theta_t^s), [g_t^c,g_t^s]+[\hat{g_t^c},\hat{g_t^s}]-[g_t^c,g_t^s]\rangle
    +
    \frac{L\eta^2}{2}\mathbb{E}\|[g_t^c,g_t^s]+[\hat{g_t^c},\hat{g_t^s}]-[g_t^c,g_t^s]\|^2
    \\
    &\le
    -\eta\mathbb{E}\langle \nabla F(\theta_t^c,\theta_t^s), [g_t^c,g_t^s]\rangle
    -
    \eta\mathbb{E}\langle \nabla F(\theta_t^c,\theta_t^s), [\hat{g_t^c},\hat{g_t^s}]-[g_t^c,g_t^s]\rangle
    \nonumber
    \\
    &+
    L\eta^2\mathbb{E}\|[g_t^c,g_t^s]\|^2
    +
    L\eta^2\mathbb{E}\|[\hat{g_t^c},\hat{g_t^s}]-[g_t^c,g_t^s]\|^2
    \label{442}
    \\
    &=
    (L\eta^2-\eta)\mathbb{E}\|\nabla F(\theta_t^c,\theta_t^s)\|^2+L\eta^2\mathbb{E}\|[[g_t^c,g_t^s]]-\nabla F(\theta_t^c,\theta_t^s)\|^2
    \nonumber
    \\
    &+
    L\eta^2\mathbb{E}\|[\hat{g_t^c},\hat{g_t^s}]-[g_t^c,g_t^s]\|^2
    -
    \eta\mathbb{E}\langle\nabla F(\theta_t^c,\theta_t^s), [\hat{g_t^c},\hat{g_t^s}]-[g_t^c,g_t^s]\rangle
    \label{443}
    \\
    &\le
    (L\eta^2-\eta)\mathbb\|\nabla F(\theta_t^c,\theta_t^s)\|^2+L\eta^2\sigma^2+L\eta^2(H^2E^2_t+H^2J^2E^2_t)
    \nonumber
    \\
    &-
    \eta\mathbb{E}\langle\nabla F(\theta_t^c,\theta_t^s), [\hat{g_t^c},\hat{g_t^s}]-[g_t^c,g_t^s]\rangle,
    \label{444}
\end{align}
where Eq.~\eqref{441} comes from the formula for stochastic gradient descent $\theta^s_{t+1} = \theta^s_{t} - \eta_s\hat{g_t^s}$, Eq.~\eqref{442} stems from $\mathbb{E}\| a +  b \|^2 \le 2\mathbb{E}\| a \|^2 + 2\mathbb{E} \| b \|^2$, 
Eq.~\eqref{443} holds according to $\mathbb{E}\| g_t^s\|^2 = \mathbb{E}\| g_t^s-\nabla f^s(\theta_t^s)\|^2 + \mathbb{E}\| \nabla f^s(\theta_t^s)\|^2$,
Eq.~\eqref{444} is derived based on Assumption~\ref{ass_var} and Lemma~\ref{gradientbound}.

If for any $t$, the following inequality is satisfied:
\begin{align}
    \label{product}
    \mathbb{E}\langle\nabla F(\theta_t^c,\theta_t^s), [\hat{g_t^c},\hat{g_t^s}]-[g_t^c,g_t^s]\rangle\ge0.
\end{align}
Thus, we can derive: 
\begin{align}
    \mathbb{E}(F(\theta_{t+1}^c,\theta_{t+1}^s)-F(\theta_t^c,\theta_t^s))
    &\le
    (L\eta^2-\eta)\mathbb\|\nabla F(\theta_t^c,\theta_t^s)\|^2+L\eta^2\sigma^2+L\eta^2(H^2E^2_t+H^2J^2E^2_t)
    \\
    &\le
    -\frac{\eta}{2}\mathbb\|\nabla F(\theta_t^c,\theta_t^s)\|^2+L\eta^2\sigma^2+L\eta^2(H^2E^2_t+H^2J^2E^2_t),
\end{align}
where $L\eta^2-\eta\le-\frac{\eta}{2}$ holds because $\eta\in(0, \frac{1}{2L}]$.

Rearrange the inequality, divide $\frac{\eta}{2}$ on both sides to get:
\begin{align}
    \mathbb{E}\|\nabla F(\theta_t^c,\theta_t^s)\|^2
    \le
    \frac{2}{\eta}\mathbb{E}(F(\theta_{t}^c,\theta_{t}^s)-F(\theta_{t+1}^c,\theta_{t+1}^s))+2L\eta(\sigma^2 + H^2E_t^2+H^2J^2E_t^2).
\end{align}

Accumulate $T$ iterations, divide $T$ on both sides, then:
\begin{align}
    \frac{1}{T}\sum_{t=1}^{T}\mathbb{E}\|
    \nabla F(\theta_t^c,\theta_t^s)
    \|^2\le\frac{2}{\eta T}\mathbb{E}(F(\theta_{1}^c,\theta_{1}^s)-F(\theta_{T+1}^c,\theta_{T+1}^s))
    +
    2L\eta(\sigma^2+(1+J^2)H^2\frac{1}{T}\sum_{t=1}^{T}E_t).
\end{align}

Let $\varepsilon^2=(1+J^2)H^2\frac{1}{T}\sum_{t=1}^{T}E_t$ and $\gamma = \mathbb{E}(F(\theta_{1}^c,\theta_{1}^s)-F(\theta_{T+1}^c,\theta_{T+1}^s))$ then:
$$
    \frac{1}{T}\sum_{t=1}^{T}\mathbb{E}\|\nabla F(\theta_t^c,\theta_t^s)\|^2
    \le
    \frac{2\gamma}{\eta T}+2L\eta(\sigma^2+\varepsilon^2).
$$
Choosing $\eta = \sqrt{\frac{\gamma}{TL(\sigma^2+\varepsilon^2)}}$, 
we have the following convergence rate:
\begin{align*}    
    \frac{1}{T}\sum_{t=1}^{T}\mathbb{E}\|\nabla F(\theta_t^c,\theta_t^s)\|^2\le 4\sqrt{\frac{\gamma L(\sigma^2+\varepsilon^2)}{T}}.
\end{align*}

If Eq.\eqref{product} is not satisfied, we have:
\begin{align}
    -\eta\mathbb{E}\langle\nabla F(\theta_t^c,\theta_t^s), [\hat{g_t^c},\hat{g_t^s}]-[g_t^c,g_t^s]\rangle
    &\le
    \frac{\eta}{2}\mathbb{E}\|\nabla F(\theta_t^c,\theta_t^s)\|^2
    +
    \frac{\eta}{2}\|[\hat{g_t^c},\hat{g_t^s}]-[g_t^c,g_t^s]\|^2
    \label{445}
    \\
    &\le
    \frac{\eta}{2}\mathbb{E}\|\nabla F(\theta_t^c,\theta_t^s)\|^2
    +
    \frac{\eta}{2}(H^2E^2_t+H^2J^2E^2_t),
    \label{446}
\end{align}
where Eq.~\eqref{445} comes from $-\langle a, b \rangle=\frac{1}{2}\| a\|^2 + \frac{1}{2}\| b\|^2 -\frac{1}{2}\| a + b\| ^2$, Eq.~\eqref{446} is derived based on Lemma.\ref{gradientbound}.

\begin{align}
    \mathbb{E}(F(\theta_{t+1}^c,\theta_{t+1}^s)-F(\theta_t^c,\theta_t^s))
    &\le
    (L\eta^2-\eta)\mathbb{E}\|\nabla F(\theta_t^c,\theta_t^s)\|^2+L\eta^2(\sigma^2+H^2E^2_t+H^2J^2E^2_t)
    \nonumber
    \\
    &+\frac{\eta}{2}\mathbb{E}\|\nabla F(\theta_t^c,\theta_t^s)\|^2
    +
    \frac{\eta}{2}(H^2E^2_t+H^2J^2E^2_t)
    \\
    &=(L\eta^2-\frac{\eta}{2})\mathbb{E}\|\nabla F(\theta_t^c,\theta_t^s)\|^2+L\eta^2(\sigma^2+H^2E^2_t+H^2J^2E^2_t)+\frac{\eta}{2}(H^2E^2_t+H^2J^2E^2_t)
    \\
    &\le
    -\frac{\eta}{4}\mathbb{E}\|\nabla F(\theta_t^c,\theta_t^s)\|^2+L\eta^2(\sigma^2+H^2E^2_t+H^2J^2E^2_t)+\frac{\eta}{2}(H^2E^2_t+H^2J^2E^2_t)
    \label{447}
\end{align}

Where Eq.~\eqref{447} stems from $L\eta^2-\frac{\eta}{2}\le-\frac{\eta}{4}$ because $\eta\in(0, \frac{1}{4L}]$.

Rearrange the inequality, divide $\frac{\eta}{4}$ on both sides to get: 
\begin{align}
    \mathbb{E}\|\nabla F(\theta_t^c,\theta_t^s)\|^2
    \le
    \frac{4}{\eta}\mathbb{E}(F(\theta_{t}^c,\theta_{t}^s)-F(\theta_{t+1}^c,\theta_{t+1}^s))+4L\eta(\sigma^2 + H^2E_t^2+H^2J^2E_t^2)+2(H^2E_t^2+H^2J^2E_t^2).
\end{align}

Accumulate $T$ iterations, divide $T$ on both sides, then:
\begin{align}
\frac{1}{T}\sum_{t=1}^{T}\mathbb{E}\|\nabla F(\theta_t^c,\theta_t^s)\|^2&\le\frac{4}{\eta T}\gamma+4L\eta(\sigma^2+\varepsilon^2)+2\varepsilon^2.
\end{align}

Choosing $\eta_s=\sqrt{\frac{\gamma}{TL(\sigma^2+\varepsilon^2)}}$, we can derive the following convergence rate:

\begin{align}
\frac{1}{T}\sum_{t=1}^{T}\mathbb{E}\|\nabla F(\theta_t^c,\theta_t^s)\|^2\le 8\sqrt{\frac{\gamma L(\sigma^2+\varepsilon^2)}{T}}+2\varepsilon^2.
\end{align}
\end{proof}

\section{Mask vs. Key-Value Pair}
\label{appendix:mask_vs_keyvalue}
Sparse matrices have multiple storage methods, and the key-value pair is one of them. When the sparsification ratio is high, the cost of the key-value pairs is lower. We use $d$ to represent the data dimension, $f_1$ to represent the bit width of the floating-point number, and $f_2$ to represent the bit width of the key. Therefore, the storage cost of the 1-bit mask is $d+f_1k$, and the storage cost of key-value pairs is $(f_1+f_2)k$. When the sparse rate $1-k/d$ is greater than $1-1/f_2$, the overhead of the key-value pair is lower. Because the dimensions of the feature map are usually $4$ (batch, channel, height, width), the bit width of $f_2$ usually needs to be at least $32$. Therefore, key-value pairs should only be used for storage when the sparse rate exceeds $96.875\%$.

\section{Proof of Theorem~\ref{better_than_qu_sp}}
\label{appendix:better_than_qusp}
\begin{proof}
Here are the upper bounds of the error for QU and MS:
\begin{align}
    \mathbb{E}\|QU(x)-x\|_2^2&\le\frac{\sqrt{d}}{2^{q_1}-1}\|x\|_2^2,
    \\
    \mathbb{E}\|MS(x)-x\|_2^2&\le\frac{\alpha\sqrt{d-k_2}}{2^{q_2}-1}\|x\|_2^2.
\end{align}
The condition for MS to be superior to QU is:
\begin{align}
    \frac{\alpha\sqrt{d-k_2}}{2^{q_2}-1}\le\frac{\sqrt{d}}{2^{q_1}-1}.
    \label{better_qu_target}
\end{align}
Rearrange the inequality, we can get:
\begin{align}
    \alpha\sqrt{1-\frac{k_2}{d}}\le\frac{2^{q_2}-1}{2^{q_1}-1}.
\end{align}
The right formula satisfies:
\begin{align}
    \frac{2^{q_2}-1}{2^{q_1}-1}>\frac{2^{q_2}-1}{2^{q_1}}=2^{-1-f\frac{k_2}{d}}.
\end{align}
Note that $\alpha\in(0, 0.5)$ and $\frac{k_2}{d}\rightarrow0$, it is obtained that $\alpha\sqrt{1-\frac{k_2}{d}}\le2^{-1-f\frac{k_2}{d}}$, and Eq.~\eqref{better_qu_target} is satisfied. 

Here are the upper bounds of the error for SP and MS:
\begin{align}
    \mathbb{E}\|SP(x)-x\|_2^2&\le\frac{d-k_1}{d}\|x\|_2^2,
    \\
    \mathbb{E}\|MS(x)-x\|_2^2&\le\frac{\alpha\sqrt{d-k_2}}{2^{q_2}-1}\|x\|_2^2.
\end{align}
The condition for MS to be superior to SP is:
\begin{align}
    \frac{\alpha\sqrt{d-k_2}}{2^{q_2}-1}\le\frac{d-k_1}{d}.
    \label{target_better_sp}
\end{align}
The left formula satisfies:
\begin{align}
    \frac{\alpha\sqrt{d-k_2}}{2^{q_2}-1}\le  \frac{\alpha (d-k_2)}{2^{q_2}-1}\le\alpha\exp{\frac{k_1-k_2}{q_2-1}}=\alpha\exp\frac{d}{f}.
\end{align}
Rearrange the inequality:
\begin{align}
    \alpha\exp{\frac{d}{f}}\le\frac{2^{q_2}-1}{d}.
\end{align}
Note that the left formula is an constant, as the $q_2$ increase, the inequality can be satisfied, and Eq.~\eqref{target_better_sp} can be satisfied. 

Therefore, Theorem.~\ref{better_than_qu_sp} is proven.
\end{proof}
\end{document}